\DeclareDocumentCommand{\todo}{o g}{\IfNoValueTF{#1}{\begingroup\color{magenta}TODO: #2\endgroup}{\begingroup\color{magenta}#1 #2\endgroup}}
\newcommand{\ol}[1]{\overline{#1}}
\renewcommand{\leq}{\leqslant}
\renewcommand{\geq}{\geqslant}
\newcommand{\cspMIof}[2]{\ensuremath{\mathit{CR}^{#2}(#1)}} %
\newcommand{\OCFsolutionsOf}[1]{\ensuremath{\mathit{Sol_{OCF}}(#1)}}
\newcommand{\OCFsolutionsRnMI}[1]{\ensuremath{\OCFsolutionsOf{\cspMIof{\R_n}{n-1}}}}
\newcommand{\induzierteOCF}[1]{\ensuremath{\kappa_{\!#1}}} %
\newcommand{\syntheticKB}[1]{\texttt{kb\_synth<\(n\)>\_c<\(2n\!\!-\!\!1\)>.pl}}
\newcommand{\R}{\ensuremath{\mathcal R}}
\newcommand{\notA}{\overline{A}}
\newcommand{\notB}{\overline{B}}
\newcommand{\naturals}{\mathbb{N}}
\newcommand*{\centernot}{%
	\mathpalette\@centernot
}
\def\@centernot#1#2{%
	\mathrel{%
		\rlap{%
			\settowidth\dimen@{$\m@th#1{#2}$}%
			\kern.5\dimen@
			\settowidth\dimen@{$\m@th#1=$}%
			\kern-.5\dimen@
			$\m@th#1\not$%
		}%
		{#2}%
	}%
}
\DeclareRobustCommand\nmableitSymb{\mathrel|\mkern-.5mu\joinrel\sim} %
\newcommand{\nmableit}{\ensuremath{\mbox{$\,\nmableitSymb\,$}}} %
\newcommand{\beweisendezeichen}%
{\penalty50\hspace*{0pt plus 1fil}\parfillskip=0pt\mbox{$\Box$}}
\newcommand{\fussnoteOhneMarkierung}[1]%
{%
\footnote{#1}%
\addtocounter{footnote}{-1}%
}
\newlength{\abstand}
\newcommand{\ifLatexThree}[2]{\@ifpackageloaded{xparse}{#1}{#2}}
\newcommand{\ifAMSmath}[2]{\@ifpackageloaded{amsmath}{#1}{#2}}
\newcommand{\ifMathSCR}[2]{\@ifpackageloaded{mathrsfs}{#1}{#2}}
\newcommand{\ifMathHyperREF}[2]{\@ifpackageloaded{hyperref}{#1}{#2}}
	\NewDocumentCommand{\headword}{s o m}{\IfBooleanTF{#1}{#3}{\textbf{#3}}\IfNoValueTF{#2}{\index{#3}}{\index{#2}}}%
	\def\@headword#1{\textbf{#1}\index{#1}}%
	\def\@@headword#1{#1\index{#1}}%
	\def\headword#1{\@ifstar\@headword{#1}\@@headword{#1}}%
\newcommand*{\centernot}{%
	\mathpalette\@centernot
}
\def\@centernot#1#2{%
	\mathrel{%
		\rlap{%
			\settowidth\dimen@{$\m@th#1{#2}$}%
			\kern.5\dimen@
			\settowidth\dimen@{$\m@th#1=$}%
			\kern-.5\dimen@
			$\m@th#1\not$%
		}%
		{#2}%
	}%
}
\DeclareRobustCommand\nmableitSymb{\mathrel|\mkern-.5mu\joinrel\sim} %
\newcommand{\nmableit}{\ensuremath{\mbox{$\,\nmableitSymb\,$}}} %
	\newcommand{\seqref}[1]{\hyperref[{#1}]{\textup{\tagform@split{\getrefnumber{#1}}}}}%
	\newcommand{\seqref}[1]{\textup{\tagform@split{\getrefnumber{#1}}}}%
\newcommand\tagform@split[1]{%
	\begingroup
	\m@th\normalfont(\ignorespaces #1\unskip\@@italiccorr)%
	\endgroup
}
\newcommand{\leqnomode}{\tagsleft@true\let\veqno\@@leqno}
\newcommand{\reqnomode}{\tagsleft@false\let\veqno\@@eqno}
\newcommand{\pushright}[1]{\ifmeasuring@#1\else\omit\hfill$\displaystyle#1$\fi\ignorespaces}
\newcommand{\pushleft}[1]{\ifmeasuring@#1\else\omit$\displaystyle#1$\hfill\fi\ignorespaces}
\newcommand{\specialcell}[1]{\ifmeasuring@#1\else\omit$\displaystyle#1$\ignorespaces\fi}
\newcommand{\tuple}[1]{\ensuremath{\langle{#1}\rangle}}
\newcommand{\modelsOf}[1]{\ensuremath{\llbracket #1\rrbracket}}
\newcommand{\beliefsOf}[1]{\ensuremath{\mathit{Bel}\left(#1\right)}}
\newcommand{\beliefsOfCond}[1]{\ensuremath{\mathit{Bel}^\mathit{cond}\left(#1\right)}}
\newcommand{\propLang}{\ensuremath{\mathcal{L}}}
\newcommand{\condLang}{\ensuremath{(\propLang|\propLang)}}
\newcommand{\ramseyCond}[2]{\ensuremath{(\,#1\,|\,#2\,)}}
\title{Descriptor Revision for Conditionals:\\ Literal Descriptors and Conditional Preservation}
\newcommand{\disjunctionFree}{elementary}
\def\myorcidID#1{\ifx\hyper@anchor\@undefined\unskip$^{[#1]}$\else\href{https://orcid.org/#1}{\protect\includegraphics[height=8px]{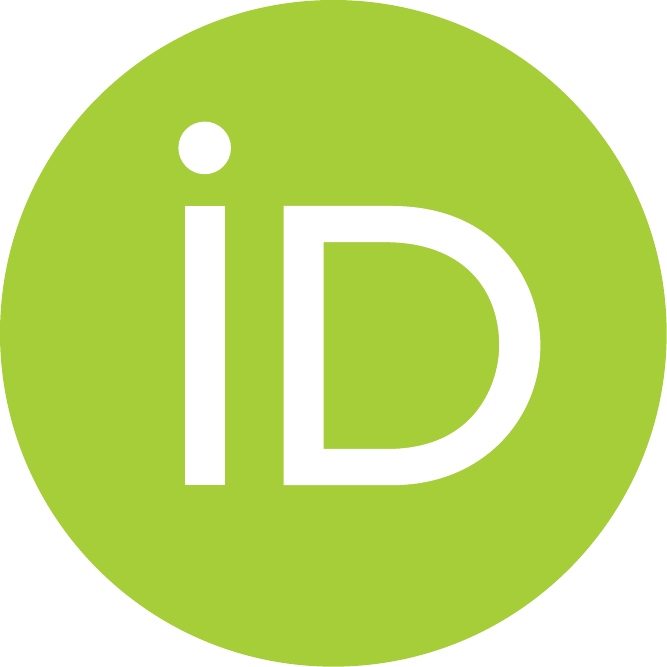}}\fi}
\author{%
	Kai Sauerwald%
	\myorcidID{0000-0002-1551-7016}
	\and Jonas Haldimann%
	\myorcidID{0000-0002-2618-8721}
	\and Martin von Berg
	\and Christoph Beierle
}
\authorrunning{Beierle and Sauerwald} %
\institute{\texttt{\{%
kai.sauerwald,jonas.haldimann,christoph.beierle\}@fernuni-hagen.de}\\FernUniversität in Hagen\\58084 Hagen, Germany%
}
\begin{document}
	
\maketitle

\begin{abstract}
Descriptor revision by Hansson is a framework for addressing the
problem of belief change.
In descriptor revision, different kinds of change processes are
dealt with in a joint framework. Individual change requirements
are qualified by specific success conditions expressed by a belief descriptor,
and belief descriptors can be combined by logical connectives.
This is in contrast to the currently dominating AGM paradigm shaped by
Alchourrón, Gärdenfors, and Makinson, where different kinds of changes,
like a revision or a contraction, are dealt with separately.
In this article, we investigate the realisation of descriptor revision
for a conditional logic while restricting descriptors to the conjunction of literal descriptors.
We apply the principle of conditional preservation developed by Kern-Isberner
 to descriptor revision
for conditionals, show how descriptor revision for conditionals under these
restrictions can be characterised by a constraint satisfaction problem,
and implement it using constraint logic programming.
Since our conditional logic subsumes propositional logic, our approach
also realises descriptor revision for propositional logic. 
\end{abstract} 
\section{Introduction}
\label{sec:introduction}
The approach to belief change by Alchourrón, Gärdenfors, and Makinson (AGM) and its successors are currently the dominating paradigm for belief change. 
In this theory, three main kinds of belief changes are subject of interest: \emph{revision} (incorporating new beliefs into an agent's belief state while maintaining consistency), \emph{contraction} (removing beliefs from the agent's belief state), and \emph{expansion} (incorporating new beliefs into an agent's belief state without maintaining consistency). 
The most prominent difference between these kinds of changes is their success condition. 
The overall approach to the problem of belief change by AGM is top-down, 
starting from the axiomatisation of each of the three kinds of changes and then investigating the representational issues through representation theorems.

In the last 20 years, the AGM theory has been extended into several directions and has been deeply investigated. This gives new insights on the requirements of representation and conceptual problems of (AGM) belief change. 
In particular, for Hansson \cite{KS_Hansson2019}, the requirement of epistemic states for iterative belief change \cite{KS_DarwichePearl1997}, the central role of conditionals in belief change and non-monotonic logic \cite{KS_MakinsonGaerdenfors1991,KS_SauerwaldKern-IsbernerBeierle2020} and 
problems like the non-finite representability of the result of a contraction \cite{KS_Hansson2017a} or concerns about the \enquote{select-and-intersect} approach of AGM\cite{KS_Hansson2019} were a motivation to design a new framework for belief change. 
Descriptor revision by Hansson \cite{KS_Hansson2014} follows the top-down approach to belief change, but, in contrast to the AGM paradigm, in descriptor revision, different kinds of changes are expressible in one joint framework. 
For this, Hansson introduced a language for success conditions, called belief descriptors,
implying that
the success condition of a change is not implicitly hidden in the kind of operation, but an explicit part of the change process.
This allows to express and analyse change processes that go beyond the classical AGM operations, e.g., a change process where a contraction of a belief $ \alpha $ and a revision by $ \beta $ appear at the same time.
Descriptor revision has been broadly investigated by Hansson \cite{KS_Hansson2017,KS_Hansson2015,KS_Hansson2016,KS_Hansson2016a,KS_Hansson2017a,KS_Hansson2019}, but did not gain as much attention as AGM \cite{KS_Zhang2017}. 
In particular, to the best of our knowledge, until now, no approach to the realisation of descriptor revision is available.

In this article, we investigate descriptor revision for a conditional logic while using ordinal conditional functions \cite{KS_Spohn1988}, also called ranking functions, as representation for epistemic states.
We outline how to instantiate the framework of descriptor revision for this logic and design an approach for its realisation.
Furthermore, for descriptor revision we use and adapt the sophisticated principle of conditional preservation by Kern-Isberner \cite{Kern-Isberner01f,KS_Kern-Isberner2004} for ranking functions.
In summary, the main contributions of this article are:
\begin{itemize}
	\item Introduction of conditional descriptor revision, which introduces the principle of conditional preservation to the framework of descriptor revision.
	\item A sound and complete characterisation of conditional descriptor revision for elementary descriptors by a constraint satisfaction problem.
	\item Implementation of elementary descriptor revision using constraint logic programming and by employing the developed characterisation.
\end{itemize}

The article is organised as follows. In Section \ref{sec:prelim}, we present logical preliminaries. We recall descriptors and descriptor revision in Section \ref{sec:prelim_descriptor_revision}.
Section \ref{sec:conditional_descriptor_revision} introduces our framework of conditional descriptor revision. 
Section \ref{sec:constraint_system} develops a characterisation of conditional descriptor revision for elementary descriptors by a constraint satisfaction problem. The implementation of this approach is sketched in Section \ref{sec:implementation}. 
We conclude and point out future work in Section \ref{sec:conclusion}. 
\section{Logical Preliminaries}
\label{sec:prelim}
\renewcommand{\propLang}{\ensuremath{\mathcal{L}^\mathrm{prop}}}
\renewcommand{\condLang}{\ensuremath{\mathcal{L}^\mathrm{cond}}}
\renewcommand{\modelsOf}[1]{\ensuremath{\mathit{Mod}(#1)}}
\renewcommand{\modelsOf}[1]{\ensuremath{\mathit{Mod}(#1)}}
\renewcommand{\ramseyCond}[2]{\ensuremath{(#1|#2)}}

Let $ \Sigma $ be a propositional signature (non empty finite set of propositional variables) and $ \propLang $ the propositional language over $ \Sigma $. 
With upper case letters $ A,B,C,\ldots $, we denote formulas in $ \propLang $ and with lower case letters $ a,b,c,\ldots$ propositional variables from $ \Sigma $.
	We allow the typical abbreviation $ A \to B $ for $ \neg A \lor B $,
	 abbreviate $ A \land B $ by $ AB $ and write $ \ol{A} $ for $ \neg A $.
	 With $ \top  $, we denote a propositional tautology and with $ \bot $ a propositional falsum.
	The set of 
	propositional interpretations $ \Omega=\mathcal{P}(\Sigma) $, also called set of worlds, is identified with the set of corresponding complete conjunctions over $ \Sigma $, where $ \mathcal{P}(\cdot) $ is the powerset operator.
Propositional entailment is denoted by $ \models $, the set of models of $ A $ with $ \modelsOf{A} $, and $ Cn(A)=\{ B \mid A \models B \} $ is the deductive closure of $ A $.
For a set $ X $, we define $ Cn(X)=\{ B \mid X \models B \} $ and say $ X $ is deductively closed if $ X = Cn(X) $.
In the context of belief change, a deductively closed set is also called a \emph{belief set}.

A function  $\kappa: \Omega \to \naturals$ such that $\kappa^{-1}(0) \neq \emptyset$ is a called a \emph{ordinal conditional function (OCF)}, also called a \emph{ranking
	function} \cite{KS_Spohn1988}. It expresses degrees of plausibility of interpretations.
This is lifted to propositional formulas $A$
by specifying degrees of disbelief.
More formally, we have
$\kappa(A) := \min\{\kappa(\omega) \mid \omega \models A\}$, so that
$\kappa(A \vee B) = \min\{\kappa(A), \kappa(B)\}$. 
With $ \modelsOf{\kappa} =\{\omega\mid\kappa(\omega)=0\}$, we denote the minimal interpretations with respect to $\kappa$, and $ \beliefsOf{\kappa} $ denotes the theory of propositional formulas that hold in all $\omega \in \modelsOf{\kappa} $.

Over $ \Sigma $ and $ \propLang $, we define the set of conditionals $ \condLang = \{ (B|A) \mid A,B \in \mathcal{L}\}$. A conditional $(B|A)$ formalizes ``\textit{if $A$ then usually $B$}'' and establishes a plausible connection between the \emph{antecedent} $A$ and the \emph{consequent} $ B $. 
Conditionals with tautological antecedents are taken as plausible statements about the world. 
Because conditionals go well beyond classical logic, they require a richer setting for their semantics than classical logic.
Following De Finetti \cite{KS_Finetti1937}, a conditional $(B|A)$ can be \emph{verified} (\emph{falsified}) by a possible world $\omega$ iff $\omega \models AB$ ($\omega \models A \overline{B}$). 
If $\omega \not\models A$, then we say the conditional is \emph{not applicable} to $\omega$.

Ranking functions serve here as interpretations in a model theory for a conditional logic. 
We say a conditional $(B|A)$ is accepted in a ranking function $\kappa$,
written as $
\kappa \models (B|A) $, iff $
\kappa(AB) < \kappa(A\notB), $
i.e., iff the verification $AB$ of the conditional is more plausible than its falsification $A\notB$. 
For a propositional formula $A$, we define $\kappa \models A$ if $\kappa \models (A|\top)$, i.e., iff $\kappa(A) < \kappa(\notA)$ or iff $\kappa(\notA) > 0$, since at least one of $\kappa(A), \kappa(\notA)$ must be $0$ due to $\kappa^{-1}(0) \neq \emptyset$.
The models of a conditional $ \ramseyCond{B}{A} $ are the set of all ranking functions accepting $ \ramseyCond{B}{A} $, i.e. $ \modelsOf{\ramseyCond{B_1}{A_1}} = \{ \kappa \mid \kappa\models\ramseyCond{B}{A} \} $.
A conditional $ \ramseyCond{B_1}{A_1} $ entails $ \ramseyCond{B_2}{A_2} $, written $ \ramseyCond{B_1}{A_1} \models \ramseyCond{B_2}{A_2} $, if $ \modelsOf{\ramseyCond{B_1}{A_1}} \subseteq \modelsOf{\ramseyCond{B_2}{A_2}} $ holds.
Furthermore, we define the set of consequences for $ X\subseteq\condLang $ by $ Cn(X) = \{  \ramseyCond{B}{A} \mid X \models \ramseyCond{B}{A} \} $. 
As usual, $ X\subseteq\condLang $ is called deductively closed if $ X = Cn(X) $.
This ranking function based semantics can be mapped to, and can also be obtained from, other semantics of conditionals \cite{KS_BeierleKern-Isberner2012}.

\newcommand{\Kpen}{\ensuremath{\kappa_\mathit{p}}}
\newcommand{\Kpcp}{\ensuremath{\kappa^\circ_\mathit{p}}}
\newcommand{\Kpsi}{\ensuremath{\kappa^\circ_\Psi}}
\newcommand{\Rpen}{\ensuremath{\mathcal{R}_\mathit{pen}}}
\begin{example}[adapted \cite{BeierleKernIsbernerSauerwaldBockRagni2019KIzeitschrift}]\label{exmpl:abstract-framework}
	Let $\Sigma=\{p,b,f\}$ 
	with $p$ meaning 
	\enquote{penguin}, $b$ \enquote{bird} and $f$ \enquote{able to fly}.
	\enquote{Birds normally fly} is modelled with the conditional $r_1=(f|b)$, \enquote{penguins normally do not fly} with $r_2=(\ol{f}|p)$, and \enquote{penguins are normally birds} with $r_3=(b|p)$.
	Consider the ranking function $ \Kpen $ from Table \ref{tbl:exmpl}, which will act as our running example for the following sections (where we will also elaborate the other ranking function and conditionals shown in Table~\ref{tbl:exmpl}).
	Table \ref{tbl:exmpl} also contains the verifying and falsifying interpretations of the conditional $ (\ol{f}|p) $.
	The ranking function $ \Kpen $ 
	accepts all conditionals in $\Rpen = \{r_1, r_2, r_3 \}$, i.e. $ \Kpen\models r_i $ for all $ 1\leq i\leq 3 $.
	For example, $\kappa \models r_1$ because $\kappa(bf) = 0 < 1 = \kappa(b\ol{f})$ holds.
	Sure, after reading a lot of papers from knowledge representation, the ranking function $ \Kpen $ is the only viable belief state representing beliefs about penguins, flying and birds for an agent.
\end{example}

\begin{table*}[t]
	\centering
	\newcolumntype{x}[1]{>{\centering\arraybackslash\hspace{0pt}}p{#1}}
\begin{tabular}{x{1cm}|ccc|cc}
	\toprule
	                         &                     \multicolumn{3}{c|}{conditionals}                      & \multicolumn{2}{c}{belief states} \\
	$\omega$                 & $ \ramseyCond{p}{b} $ & $ \ramseyCond{f}{p} $ & $ \ramseyCond{\ol{f}}{p} $ & $\Kpen(\omega)$ & $\Kpcp(\omega)$ \\ \midrule
	$b\,f\,p$                &           v           &           v           &             f              &        2        &        1        \\
	$b\,f\,\ol{p}$           &           f           &                       &                            &        0        &        2        \\
	$b\,\ol{f}\,p$           &           v           &           f           &             v              &        1        &        1        \\
	$b\,\ol{f}\,\ol{p}$      &           f           &                       &                            &        1        &        3        \\
	$\ol{b}f\,p$             &                       &           v           &             f              &        4        &        3        \\
	$\ol{b}f\,\ol{p}$        &                       &                       &                            &        0        &        0        \\
	$\ol{b}\,\ol{f}\,p$      &                       &           f           &             v              &        2        &        2        \\
	$\ol{b}\,\ol{f}\,\ol{p}$ &                       &                       &                            &        0        &        0        \\ \bottomrule
\end{tabular}
	\caption{Verifying (v) and falsifying (f) interpretations for the conditionals $ \ramseyCond{p}{b}$, $\ramseyCond{f}{p}$, and $\ramseyCond{\ol{f}}{p} $,   and the ranking functions for the running penguin example.}
	\label{tbl:exmpl}
\end{table*} 
\section{Descriptors and Descriptor Revision}
\label{sec:prelim_descriptor_revision}
\newcommand{\belief}[1]{\ensuremath{\mathfrak{B}#1}}

The main building blocks of descriptor revision are belief descriptors, which provide a language for expressing membership constraints for a belief set.

\begin{definition}[Descriptor \cite{KS_Hansson2017}]\label{def:descriptor}
Let $ \mathcal{L} $ be a logical language. 
For any sentence $ \varphi\in\mathcal{L} $ the expression $ \belief{\varphi} $ is an \emph{atomic descriptor} (over $ \mathcal{L} $). 
Any connection of atomic descriptors with disjunction, conjunction and negation is called a \emph{molecular descriptor} (over $ \mathcal{L} $). 
A \emph{composite descriptor} (over $ \mathcal{L} $) is a set of molecular descriptors (over $ \mathcal{L} $).
\end{definition}

Like Hansson \cite{KS_Hansson2017}, we simplify notions by denoting composite descriptors just as descriptors. We also call a molecular descriptor of the form $ \belief{\varphi} $ or $ \neg\belief{\varphi} $ \emph{literal descriptor}. An  \emph{\disjunctionFree\ descriptor} is a set of literal descriptors.

\begin{definition}[Descriptor semantics \cite{KS_Hansson2017}]
	An atomic descriptor $ \belief{\varphi} $ holds in a belief set $ X $, written $ X \Vdash  \belief{\varphi} $, if $ \varphi \in X $. 
	This is lifted to molecular descriptors truth-functionally. 
	A descriptor $ \Psi $ holds in $ X $, likewise written $ X \Vdash \Psi $, if $ X \Vdash \alpha $ holds for every molecular descriptor $ \alpha\in \Psi $.
\end{definition}

For an example of descriptors, consider the following example.

\begin{example}
Assume that $ \mathcal{L}_{ab} $ is the propositional language over $ \Sigma=\{a,b\} $ and $ X=Cn(a\lor b) $. 
Then, $ \neg\belief{a} $ expresses that $ a $ is not part of the belief set, whereas $ \belief{\neg a} $ states that the formula $ \neg a $ is part of the belief set, e.g. $ X \Vdash \neg\belief{a} $ and $ X \not\Vdash \belief{\neg a} $. 
Likewise, $ \belief{a} \lor \belief{b} $ expresses that $ a $ or $ b $ is believed, whereas $ \belief{(a\lor b)} $ states that the formula $ a\lor b $ is believed, e.g. $ X \Vdash \belief{(a\lor b)} $ and $ X \not\Vdash \belief{a}\lor\belief{b} $.
\end{example}

For the setting of belief change, we assume that every agent is equipped with a belief state, also called epistemic state, which contains all information necessary for maintaining her belief apparatus. 
We denote belief states by $ K,K_1,K_2,\ldots $ following the notion of Hansson \cite{KS_Hansson2017}. 
General descriptor revision does not specify what a belief state is, but assumes that a belief set $ \beliefsOf{K} $ is immanent for every epistemic state $ K $.
To make descriptors compatible with belief states, we naturally lift the semantics  to belief states, i.e. $ K \Vdash \Psi $ if $ \beliefsOf{K} \Vdash \Psi $.

\begin{example}[continued%
	]\label{exmpl:descriptor}
		Assume ranking functions as a representation of belief states. 
		Let $ \Kpen $ be the belief state from Table \ref{tbl:exmpl} and let $ \Psi= \{ \belief{\overline{p}},\,\belief{bf},\,\neg\belief{b\overline{f}} \}$ be an \disjunctionFree\ descriptor. 
		The descriptor $ \Psi $ expresses belief in $ \ol{p} $ (it is not a penguin) and $ bf $ (it is a flying bird) and not believing $ b\ol{f} $ (it is a non-flying bird). 
		The immanent belief set of $ \Kpen $ is $ \beliefsOf{\Kpen}=Cn(\ol{p}\land(b\to f)) $. 
		The descriptor $ \Psi $ holds in $ \Kpen $, i.e. $ \Kpen \Vdash \Psi $, since $ \ol{p}\in\beliefsOf{\Kpen} $, $ bf\in\beliefsOf{\Kpen} $ and $ b\ol{f}\notin\beliefsOf{\Kpen} $.
\end{example}

AGM theory \cite{KS_AlchourronGaerdenforsMakinson1985} focuses on properties of revision (or contraction) operations by examining the interconnection between prior belief state, new information and posterior belief state of a change.
Descriptor revision examines the interconnection between prior belief state and posterior belief states that satisfy a particular descriptor.
Let $ \mathbb{K}_K $ denote the set of all reasonable conceivable successor belief states for a belief state $ K $.
A descriptor revision by a descriptor $ \Psi $ is the process of choosing a state $ K' $  from $ \mathbb{K}_K $ such that $ K' \Vdash \Psi  $.
We
abstract from the internal process of how $ \mathbb{K}_K $ is obtained and define descriptor revision\footnote{In the original framework by Hansson this is much more elaborated. 
	By the terminology of Hansson, here we present a form of local deterministic monoselective descriptor revision \cite{KS_Hansson2017}. 
	Moreover, we primarily focus on one change, while Hansson designs the framework for change operators.} as follows.
\begin{definition}[Descriptor Revision, adapted \cite{KS_Hansson2017}]\label{def:descriptor_revision}
Let $ K $ be a belief state, $ \mathbb{K}_K $ a set of belief states and $ C: \mathcal{P}(\mathbb{K}_K) \to \mathbb{K}_K $ be a choice function. 
Then the change from $ K $ to $ K^\circ=K \circ \Psi $ is called a \emph{descriptor revision by $ \Psi $ realised by $ C $ over $ \mathbb{K}_K $} if the following holds:
\begin{equation} \label{eq:descriptor_revision}
K \circ \Psi = C( \, \{ K' \in \mathbb{K}_K \mid K' \Vdash \Psi   \}  \, ),
\end{equation}
\end{definition}
We say that the change from $ K $ to $ K^\circ $ is a descriptor revision (by $ \Psi $), if $ C $ and $ \mathbb{K}_K $ (and $ \Psi $) exist such that the change from $ K $ to $ K^\circ $ is realised by $ C $ over $ \mathbb{K}_K $.
We also say $ K^\circ $ is the result of the descriptor revision of $ K $ (by $ \Psi $ under $ \mathbb{K}_K $).

Descriptors allow to express a variety of different success conditions, e.g. 
\begin{description}
	\item[$ \{\belief{\varphi}\} $]   Revision by $ \varphi $
	\item[$ \{\neg\belief{\varphi}\} $] Contraction by $ \varphi $   (also called revocation \cite{KS_Hansson2019})
	\item[$ \{ \neg\belief{\varphi}, \neg\belief{\neg\varphi} \} $] Giving up the judgement on $ \varphi $ (also called ignoration \cite{BeierleKernIsbernerSauerwaldBockRagni2019KIzeitschrift}) 
\end{description}	
Additionally, Hansson provides the following examples \cite{KS_Hansson2019}:
\begin{description}
	\item[$ \{ \belief{\varphi_1},\ldots,\belief{\varphi_n} \} $] Package revision by $ \{ \varphi_1,\ldots,\varphi_n \} $
	\item[$ \{ \neg\belief{\varphi},\belief{\psi} \} $] Replacement of $ \varphi $ by $ \psi $
	\item[$ \{ \belief{\varphi_1}\lor\ldots\lor\belief{\varphi_n} \} $] Choice revision by $ \{ \varphi_1,\ldots,\varphi_n \} $
	\item[$ \{\belief{\varphi}\lor\belief{\neg\varphi}\} $] Making up one's mind about $ \varphi $
\end{description}
	Note that all given examples, except for choice revision and \enquote{making up one's mind}, are elementary descriptors.
	In particular, elementary descriptor revision subsumes operations of AGM, and, furthermore, also allows to express changes which lead to a revision and a contraction at the same time. For a concrete example, we continue our running example.
\begin{example}[continued%
]\label{exmpl:descriptor_revision}
	Let $ \Kpen $ and $ \Kpcp $ be as in Table \ref{tbl:exmpl}, let $ \mathbb{K}_{\Kpen} $ be the set of all ranking functions, let $ C $ be a choice function such that $ C(X)=\Kpcp $ if $ \Kpcp\in X $, and let $ \Psi=\{ \belief{\ol{b}}\lor\belief{p},\,\neg\belief{bf} \} $ be a descriptor. 
	The descriptor $ \Psi $ expresses posterior belief in $ \ol{b} $ or belief in $ p $ and disbelief in $ bf $. 
	In particular, $ \neg\belief{bf} $ expresses a contraction with $ bf $ (it is a flying bird), but for $ \belief{\ol{b}}\lor\belief{p} $ (it is not a bird or it is a penguin), there is no straight counterpart in the AGM framework.
	Note that we have $ \beliefsOf{\Kpcp}=Cn(\ol{b}\land\ol{p}) $, and thus, it holds that $ \ol{b}\in \beliefsOf{\Kpcp} $ and $ bf \notin \beliefsOf{\Kpcp} $, and therefore, the descriptor $ \Psi $  holds in $ \Kpcp $. Thus, the change from $ \Kpen $ to $ \Kpcp $ is a descriptor revision by $ \Psi $ realised by $ C $ over $ \mathbb{K}_{\Kpen} $.
\end{example}

\section{Conditional Descriptor Revision}
\label{sec:conditional_descriptor_revision}
We instantiate descriptor revision for the case in which the underlying logic is the conditional logic $ \condLang $ and ranking functions serve as a representation for epistemic states.
Furthermore, we adapt the principle of conditional preservation by Kern-Isberner \cite{KS_Kern-Isberner2004} to the requirements of descriptor revision. 

\subsection{Adaptions for Conditionals in $ \condLang $}
\label{sec:descriptors_conditional_descriptor_revision}
In the formal framework of descriptor revision by Hansson, as recalled in Section \ref{sec:prelim_descriptor_revision}, 
semantics of a descriptor refer to a belief set, containing formulas of the underlying logic. 
Thus, when using the advanced logic $ \condLang $, we need to refer to the set of conditionals accepted by a ranking function $ \kappa $ when choosing ranking functions as representations for epistemic states.
However, the belief set $ \beliefsOf{\kappa} $ of a ranking function $ \kappa $ is a set of propositional beliefs, i.e. $ \beliefsOf{\kappa}\subseteq \propLang $. 
We define the set of conditional beliefs for a ranking function $ \kappa $ as follows:
\begin{equation*}
\beliefsOfCond{\kappa} = \{\, \ramseyCond{B}{A} \mid  \kappa \models \ramseyCond{B}{A} \,\}
\end{equation*}
Clearly, the set $ \beliefsOfCond{\kappa} $ is a deductively closed set for every ranking function $ \kappa $ and therefore a belief set.
Descriptors and descriptor revision for $ \condLang $ then refer to the set of conditional beliefs $ \beliefsOfCond{\kappa} $, and their formal definition can be easily obtained by correspondingly modifying Definitions \ref{def:descriptor} to \ref{def:descriptor_revision}.

Note that the conditional logic $ \condLang $ embeds the propositional logic $ \propLang $, hence every proposition $ A\in\propLang $ can be represented by $ \ramseyCond{A}{\top} $. 
Moreover, the definition of $ \beliefsOfCond{\kappa} $ ensures compatibility of propositional beliefs with the conditional beliefs, i.e. $ \{ \ramseyCond{A}{\top} \mid A \in \beliefsOf{K} \} \subseteq \beliefsOfCond{K} $. 
Thus, our approach to descriptor revision by conditionals, presented in the following, subsumes descriptor revision for propositions.

\subsection{Conditional Preservation}
\label{sec:pcp}
When an agent performs a belief change, the change might not only affect explicit beliefs, but also implicit beliefs. 
Boutilier proposed that belief change should also minimize the effect on conditional beliefs \cite{KS_Boutilier1996}. 
Kern-Isberner introduced the principle of conditional preservation (PCP) and gave a thorough axiomatisation of PCP \cite{Kern-Isberner01f,KS_Kern-Isberner2004} in a very general manner. 

Note that the principle of conditional preservation is usually defined as a property of a change by a set of conditionals $ \mathcal{R} $. 
However, when having a descriptor revision, the underlying change framework and its parameters and capabilities might be hidden. 
Thus, we abstract from the assumption that the change is done by a set of conditionals $ \mathcal{R} $, and just state that a change satisfies PCP with respect to a set of conditionals $ \mathcal{R} $.
This allows us to say that a change satisfies the principle of conditional preservation
without assuming 
the involvement of specific parameters
 in the underlying change framework.
In the following, we present our relaxed variant of the principle of conditional preservation for the special case of ranking functions. 

\begin{definition}[PCP for OCF changes, adapted \cite{KS_Kern-IsbernerBockSauerwaldBeierle2017}]\label{def:pcp}
	A change of a ranking function $ \kappa $ to a ranking function $ \kappa^\circ $ fulfils the \emph{principle of conditional preservation
		 with respect to the conditionals} $ \mathcal{R}=\{ \ramseyCond{B_1}{A_1},\ldots,\ramseyCond{B_n}{A_n} \} $, if 
	for every two multisets of propositional interpretations $ \Omega_1=\{\omega_1,\ldots,\omega_m\} $ and $ \Omega_2=\{\omega'_1,\ldots,\omega'_m\} $  with the same cardinality $ m $
	such that the multisets $ \Omega_1 $ and $ \Omega_2 $ contain the same number of interpretations which verify, respectively falsify, each conditional $ \ramseyCond{B_i}{A_i} $ in $ \mathcal{R} $,
 the ranking functions
	$ \kappa $ and $ \kappa^\circ $ 
are	balanced in the following way:
	\begin{equation}\label{eq:pcp_balance}
	\sum_{i=1}^{m} \kappa(\omega_i) - 
	\sum_{i=1}^{m} \kappa(\omega'_i)
	=
	\sum_{i=1}^{m} \kappa^\circ(\omega_i) - 
	\sum_{i=1}^{m} \kappa^\circ(\omega'_i)
	\end{equation}
\end{definition}

\begin{example}[continued%
]\label{exmp:pcp}
Assume our agent has moved to Antarctica and she starts to question her beliefs about penguins and birds.
	The only birds she sees in Antarctica are penguins, and moreover, she observes, trough her window, a lot of penguins jumping off a cliff, and thus, flying for a moment. 
	Her belief state is changing from $ \Kpen $ to $ \Kpcp $ from Table \ref{tbl:exmpl}. 
	Consider now the conditional $ \ramseyCond{p}{b} $ expressing that \emph{birds are usually penguins}, the conditional $ \ramseyCond{f}{p} $ expressing that \emph{penguins usually fly}, and the conditional $ \ramseyCond{\ol{f}}{p} $ expressing that \emph{penguins usually don't fly}. 
	The change from $ \Kpen $ to $ \Kpcp $ satisfies the principle of conditional preservation with respect to the conditionals in $ \mathcal{R}=\{ \ramseyCond{p}{b}, \ramseyCond{f}{p}, \ramseyCond{\ol{f}}{p} \} $. 
	For instance, the two multisets $ \Omega_1=\{ bfp, \ol{b}\,\ol{f}p \} $ and $ \Omega_2=\{ b\ol{f}p, \ol{b}fp  \} $, containing for every conditional in $ \mathcal{R} $ the same number of verifying and falsifying worlds, and their values under $ \Kpen $ and $ \Kpcp $ are balanced according to Equation~\eqref{eq:pcp_balance}, i.e.
	\begin{multline*}
	\Kpen(bfp)+\Kpen(\ol{b}\,\ol{f}p)-\Kpen(b\ol{f}p)-\Kpen(\ol{b}fp)   =  2+2-1-4 = -1\\
	 =1+2-1-3 =\Kpcp(bfp)+\Kpcp(\ol{b}\,\ol{f}p)-\Kpcp(b\ol{f}p)-\Kpcp(\ol{b}fp).
	\end{multline*}
\end{example}

The definition of the principle of conditional preservation, as given in Definition \ref{def:pcp}, does not require information about the success condition of a change.
Thus, the notion of the principle of conditional preservation is directly available for descriptor revision of conditionals when we provide a set of conditionals.
A natural choice are the conditionals appearing in a descriptor $ \Psi $.
For a descriptor $ \Psi $ over $ \condLang $, we define the set of conditionals in $ \Psi $, denoted by $ \mathit{cond}(\Psi) $, as follows:
\begin{itemize}
	\item for $ \Psi=\emptyset $ let $ \mathit{cond}(\Psi)=\emptyset $,
	\item for $ \Psi=\{ \belief{\ramseyCond{B}{A}} \} $ let $ \mathit{cond}(\Psi)=\{ \ramseyCond{B}{A} \} $,
	\item for $ \Psi=\{ \alpha,\beta,\ldots \} $ let $ \mathit{cond}(\Psi)=\mathit{cond}(\{\alpha\})\cup \mathit{cond}(\{\beta,\ldots\}) $,
	\item for $ \Psi=\{ \alpha \lor \beta \} $ let $ \mathit{cond}(\Psi)=\mathit{cond}(\{\alpha\})\cup \mathit{cond}(\{\beta\}) $,
	\item for $ \Psi=\{ \alpha \land \beta \} $ let $ \mathit{cond}(\Psi)=\mathit{cond}(\{\alpha\})\cup \mathit{cond}(\{\beta\}) $, and
	\item for $ \Psi=\{ \neg \alpha \} $ let $ \mathit{cond}(\Psi)=\mathit{cond}(\{\alpha\}) $.
\end{itemize}

In the following, we use a central characterisation \cite{KS_Kern-Isberner2001,KS_Kern-IsbernerBockSauerwaldBeierle2017} of the principle of conditional preservation to obtain a characterisation of the principle of conditional preservation for descriptor revisions.
\begin{proposition}[PCP for Descriptor Revision, adapted \cite{KS_Kern-IsbernerBockSauerwaldBeierle2017}]\label{prop:pcp_characterisation}
	\label{prop:char_c_change}
	Let $ \Psi $ be a descriptor over $ \condLang $ and  $ \mathit{cond}(\Psi) = \{ \, \ramseyCond{B_1}{A_1},\, \ldots\,,\, \ramseyCond{B_n}{A_n} \, \} $
	be the set of conditionals in $ \Psi $, and let $ \kappa^\circ $ be the result of the descriptor revision of $\kappa$ by $ \Psi $. Then this change satisfies the \emph{principle of conditional preservation} with respect to the conditionals in $ \mathit{cond}(\Psi) $ if and only if there are integers\footnote{As noted by Kern-Isberner\cite{KS_Kern-IsbernerBockSauerwaldBeierle2017}, all $ \kappa_0, \gamma_i^+, \gamma_i^- $ can be rational, but $ \kappa^\circ $ has to satisfy the requirements for OCF, in particular, all $ \kappa^\circ(\omega) $ must be non-negative integers.}
	$\kappa_0, \gamma_i^+, \gamma_i^-  \in\mathbb{Z} $, $1 \leq i \leq n$, such that:
	\begin{equation}\label{eq:pcp-characterisation-ocf}
	\kappa^\circ (\omega) = \kappa_0 + \kappa(\omega) + \sum_{1 \leq i \leq n \atop \omega \models A_i B_i} \gamma_i^+ + \sum_{1 \leq i \leq n \atop \omega \models A_i\!\land\!\neg B_i} \gamma_i^-
	\end{equation}
\end{proposition}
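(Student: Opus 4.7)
The plan is to split the proposition into its two implications and treat each separately, drawing on the standard linear-algebraic reformulation of the principle of conditional preservation.

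For the soundness direction ($\Leftarrow$), I would assume that $\kappa^\circ$ has the explicit additive form in Equation~\eqref{eq:pcp-characterisation-ocf} and verify the balance condition in Definition~\ref{def:pcp} directly. Given any two multisets $\Omega_1=\{\omega_1,\ldots,\omega_m\}$ and $\Omega_2=\{\omega'_1,\ldots,\omega'_m\}$ with the same cardinality and the same verification/falsification counts for each $(B_i|A_i)\in\mathit{cond}(\Psi)$, I would compute the difference $\sum_{j}\kappa^\circ(\omega_j)-\sum_{j}\kappa^\circ(\omega'_j)$ by substituting Equation~\eqref{eq:pcp-characterisation-ocf}. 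The $\kappa_0$ contributions cancel because $|\Omega_1|=|\Omega_2|=m$, while for each $i$ the $\gamma_i^+$ contributions cancel because $\Omega_1$ and $\Omega_2$ contain the same number of verifiers of $(B_i|A_i)$, and analogously for $\gamma_i^-$ on the falsifiers. The result is exactly Equation~\eqref{eq:pcp_balance}.

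For the completeness direction ($\Rightarrow$), I would define $\delta(\omega)=\kappa^\circ(\omega)-\kappa(\omega)$ for every world $\omega\in\Omega$ and associate with each $\omega$ a \emph{profile vector}
\begin{equation*}
\sigma(\omega)=\bigl(1,\,v_1(\omega),\,f_1(\omega),\,\ldots,\,v_n(\omega),\,f_n(\omega)\bigr)\in\mathbb{Z}^{2n+1},
\end{equation*}
where $v_i(\omega)=1$ iff $\omega\models A_iB_i$ and $f_i(\omega)=1$ iff $\omega\models A_i\overline{B_i}$ (both $0$ otherwise). Extending $\sigma$ and $\delta$ linearly to the free abelian group $\mathbb{Z}^{\Omega}$, the PCP condition is precisely the statement that $\delta$ vanishes on the kernel of $\sigma$: any pair of multisets $\Omega_1,\Omega_2$ with matching profiles corresponds to an element of $\ker\sigma$, and Equation~\eqref{eq:pcp_balance} says $\delta$ sends it to zero. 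Hence $\delta$ factors through $\sigma$, i.e., there is a linear functional $\lambda:\mathrm{Im}(\sigma)\to\mathbb{Q}$ with $\delta=\lambda\circ\sigma$. Reading off the coordinates of $\lambda$ yields rationals $\kappa_0,\gamma_i^+,\gamma_i^-$ satisfying Equation~\eqref{eq:pcp-characterisation-ocf}, and the footnote of the proposition clarifies that rationality suffices provided $\kappa^\circ(\omega)\in\naturals$ for all $\omega$, which is guaranteed since $\kappa^\circ$ is assumed to be an OCF.

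The main obstacle will be the completeness direction, specifically arguing cleanly that the factorization through $\sigma$ produces globally consistent coefficients $\kappa_0,\gamma_i^+,\gamma_i^-$ rather than merely world-by-world fits. The crucial observation that unlocks this is that the PCP condition ranges over \emph{all} equinumerous multisets with matching profiles; this is a strong enough requirement to force $\delta$ to be a linear function of the profile vector, and the extension of $\lambda$ from $\mathrm{Im}(\sigma)$ to all of $\mathbb{Q}^{2n+1}$ (by, e.g., the standard extension of linear maps on subspaces) yields the desired coefficients. I would reference the detailed development in \cite{KS_Kern-Isberner2001,KS_Kern-IsbernerBockSauerwaldBeierle2017} for the technical handling of this factorization step, since the argument is essentially the one given there, only recontextualised by identifying $\mathit{cond}(\Psi)$ as the relevant set of conditionals rather than an explicit input set of rules.
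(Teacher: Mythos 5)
Your proposal is correct and follows essentially the same route as the paper, which simply defers to Kern-Isberner's Theorem 4.6.1 with the observation that only $\mathit{cond}(\Psi)$ (and not the descriptor's success condition) enters the argument; your kernel-of-$\sigma$ factorization is precisely the conditional-structures argument from that cited proof, and your handling of the integer-versus-rational coefficients matches the proposition's footnote.
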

The proof of Proposition \ref{prop:char_c_change} is directly obtainable from a proof given by Kern-Isberner \cite[Theorem 4.6.1]{Kern-Isberner00d}, since no specific information on the success condition for the conditionals in the descriptor was used in Proposition \ref{prop:char_c_change}. 
The idea underlying Proposition \ref{prop:char_c_change} is that interpretations that are verifying and falsifying the same conditionals are treated in the same way. 
Thus, for every conditional $ \ramseyCond{B_i}{A_i}\in \mathit{cond}(\Psi) $, the two constants $ \gamma_i^+ $ and $ \gamma_i^- $ handle how interpretations are shifted over the change process. 
The constant $ \kappa_0 $ acts as a normalizer, ensuring that $ \kappa^\circ $ is indeed a ranking function, i.e. there is at least one world $ \omega $ such that $ \kappa^\circ(\omega)=0 $.

\begin{example}[continued%
]\label{exmp:pcp_charactisaion}
Consider the change from $ \Kpen $ to $ \Kpcp $, both given in Table \ref{tbl:exmpl}. 
As shown in Example \ref{exmp:pcp}, this change satisfies the principle of conditional preservation with respect to the conditionals in $ \mathcal{R}=\{ \ramseyCond{p}{b}, \ramseyCond{f}{p}, \ramseyCond{\ol{f}}{p} \} $. 
Indeed, as stated in Proposition \ref{prop:pcp_characterisation}, we can obtain $ \Kpcp $ from $ \Kpen $ via Equation \eqref{eq:pcp-characterisation-ocf} by choosing $ \kappa_0=0$, $ \gamma_1^+=0 $, $ \gamma_1^-=-1 $, $ \gamma_2^+=0 $, $ \gamma_2^-=2 $, $ \gamma_3^+=0 $, and $ \gamma_3^-=0 $.
\end{example}

\subsection{Descriptor Revision with Conditional Preservation}
\label{sec:cdr_instantiation}
\newcommand{\KappaPCP}{\ensuremath{\mathbb{K}^\mathit{PCP}_\kappa}}

The principle of conditional preservation is a powerful basic principle of belief change and it is natural to demand satisfaction of this principle.
The principle demands a specific relation between the conditionals in the prior belief state $ K $, the conditionals in the posterior state $ K^\circ $ and the conditionals in the descriptor $ \Psi $.
Remember that by Definition \ref{def:descriptor_revision}, a descriptor revision from $ K $ to $ K^\circ $ is determined by a choice function $ C $, the descriptor $ \Psi $ and the set $ \mathbb{K}_K $ such that Equation \eqref{eq:descriptor_revision} holds, but none of these components allow to express a direct relation between $ K $, $ K^\circ $ and $ \Psi $. 
Thus, there is no possibility to express conditional preservation by the means of descriptor revision.
The principle of conditional preservation is somewhat orthogonal to descriptor revision, which gives rationale to the following definition of conditional descriptor revision.

\begin{definition}[Conditional Descriptor Revision]\label{def:conditiona_descriptor_revision}
	Let $ \kappa $ be a ranking function.
	A descriptor revision of $ \kappa $ to $ \kappa^\circ $ by a descriptor $ \Psi $ over $ \condLang $ (realised by $ C $ over $ \mathbb{K}_\kappa $) is called a \emph{conditional descriptor revision} of $ \kappa $ to $ \kappa^\circ $ by $ \Psi $ (realised by $ C $ over $ \mathbb{K}_\kappa $) 
	if the change from $ \kappa $ to $ \kappa^\circ $ satisfies the principle of conditional preservation with respect to $ \mathit{cond}(\Psi) $.
\end{definition}
In Definition \ref{def:conditiona_descriptor_revision}, we choose ranking functions as representations for belief states, but note that the principle of conditional preservation also applies to other representations \cite{KS_Kern-Isberner2001}.
Thus, for other kinds of representations of belief states one might give a definition of conditional descriptor revision similar to the one given here.
However, for the rest of the article, we focus on ranking functions. 
Moreover, we assume $ \mathbb{K}_\kappa $ to be the set of all ranking functions, i.e. when revising by a descriptor over $ \Psi $, we choose over the set of all ranking functions.

\begin{example}[continued%
	]\label{exmpl:conditional_descriptor_revision}
	Consider again $ \Kpen $ to $ \Kpsi $ given in Table \ref{tbl:exmpl}.
	The change from $ \Kpen $ to $ \Kpsi $ is a conditional descriptor revision by $ \Psi=\{ \belief{\ramseyCond{p}{b}}, \neg\belief{\ramseyCond{f}{p}}, \neg\belief{\ramseyCond{\overline{f}}{p}} \} $.
	Note that $ \mathit{cond}(\Psi)=\{ \ramseyCond{p}{b}, \ramseyCond{f}{p}, \ramseyCond{\overline{f}}{p} \} $, and therefore, as stated in Example \ref{exmp:pcp}, the change from $ \Kpen $ to $ \Kpsi $ satisfies the principle of conditional preservation with respect to $ \mathit{cond}(\Psi) $. 
	Note that $ \Psi $ holds in $ \Kpcp $, i.e. $ \Kpcp \Vdash \Psi $. 
	In particular, it is the case that $ \Kpcp \Vdash \neg\belief{\ramseyCond{\overline{f}}{p}} $, which is equivalent to $ \Kpcp\not\models \ramseyCond{\overline{f}}{p} $, i.e. $ \Kpcp(\overline{f}p) \not< \Kpcp(fp) $.
\end{example}

\section{Characterisation of Conditional Descriptor Revision with Elementary Descriptors by CSPs}
\label{sec:constraint_system}
\newcommand{\crDescriptor}[1]{\ensuremath{\mathit{CR}_{\!\mathit{D}}(#1)}}
\newcommand{\solCrDescriptor}[1]{\ensuremath{\mathit{Sol}(\mathit{CR}_{\!\mathit{D}}(#1))}}

The arithmetic nature of ranking functions and the characterisation of the principle of conditional preservation by Proposition \ref{prop:pcp_characterisation} allow us to give a constraint, expressing the success condition of a literal descriptor.

\begin{definition}[Constraint for literal descriptors, $ \crDescriptor{\kappa,\alpha,\Psi} $]\label{def:constract_literal_descriptor}
Let $ \kappa $ be a ranking function, let $ \Psi=\{\alpha_1,\ldots,\alpha_m\} $ an \disjunctionFree\ descriptor over $ \condLang $ with $ \mathit{cond}(\Psi)=\{ \ramseyCond{A_1}{B_1},\ldots, \ramseyCond{A_n}{B_n} \} $, and let $ \alpha $ be a literal descriptor in $ \Psi $. 
The constraint for $ \alpha $ in $ \kappa $ under $ \Psi $, denoted by $ \crDescriptor{\kappa,\alpha,\Psi} $, on the constraint variables $ \gamma_1^+,\gamma_1^-,\ldots,\gamma_n^+,\gamma_n^- $ ranging over $ \mathbb{Z} $, is given for a positive literal $ \alpha=\belief{\ramseyCond{B_i}{A_i}} $ descriptor by 
	\begin{equation}\label{eq:pos_literal_descriptor}
	\begin{split}
\gamma_i^- - \gamma_i^+ > & \, (\min_{\omega \vDash A_i B_i} \kappa(\omega) + \sum_{\substack{j \neq i \\ \omega \vDash A_j B_j}} \gamma_j^+ + \sum_{\substack{j \neq i \\ \omega \vDash A_j \bar{B}_j}} \gamma_j^-) \\
- & (\min_{\omega \vDash A_i \bar{B}_i} \kappa(\omega) + \sum_{\substack{j \neq i \\ \omega \vDash A_j B_j}} \gamma_j^+ + \sum_{\substack{j \neq i \\ \omega \vDash A_j \bar{B}_j}} \gamma_j^-) \quad \text{ for $i = 1, \dots, n$} 
\end{split}
\end{equation}
and for a negative literal descriptor $ \alpha=\neg\belief{\ramseyCond{B_i}{A_i}} $ by
\begin{equation}\label{eq:neg_literal_descriptor}
\begin{split}
\gamma_i^- - \gamma_i^+ \leq &\,  (\min_{\omega \vDash A_i B_i} \kappa(\omega) + \sum_{\substack{j \neq i \\ \omega \vDash A_j B_j}} \gamma_j^+ + \sum_{\substack{j \neq i \\ \omega \vDash A_j \bar{B}_j}} \gamma_j^-) \\
- & (\min_{\omega \vDash A_i \bar{B}_i} \kappa(\omega) + \sum_{\substack{j \neq i \\ \omega \vDash A_j B_j}} \gamma_j^+ + \sum_{\substack{j \neq i \\ \omega \vDash A_j \bar{B}_j}} \gamma_j^-) \quad \text{ for $i = 1, \dots, n$}.
\end{split}
\end{equation}
\end{definition}
The rationale for Definition \ref{def:constract_literal_descriptor} is that a positive literal descriptor $ \{ \belief{\ramseyCond{B}{A}} \} $ holds in the posterior state $ \kappa^\circ $ if $ \ramseyCond{B}{A} $ is accepted by $ \kappa^\circ $, more formally $ \kappa^\circ \models \ramseyCond{B}{A} $, i.e. $ \kappa^\circ(AB) < \kappa^\circ(A\ol{B}) $. Likewise, a negative literal descriptor $ \{\neg\belief{\ramseyCond{B}{A}}\} $ corresponds to $ \kappa^\circ \not\models \ramseyCond{B}{A} $, i.e. $ \kappa^\circ(AB) \geq  \kappa^\circ(A\ol{B}) $.
The combining of all the constraints obtained for each literal descriptor in $ \Psi $ yields a constraint satisfaction problem.

\begin{definition}[CSP for \disjunctionFree\  descriptors, $ \crDescriptor{\kappa,\Psi} $]\label{def:constract_conditional_descriptor_revision}
	Let $ \kappa $ be a ranking function and $ \Psi $ be an \disjunctionFree\  belief descriptor with $ \mathit{cond}(\Psi)=\{ \ramseyCond{A_1}{B_1},\ldots, \ramseyCond{A_n}{B_n} \} $.
	The constraint satisfaction problem for $ \kappa $ and $ \Psi $, on the constraint variables $ \gamma_1^+,\gamma_1^-,\ldots,\gamma_n^+,\gamma_n^- $ ranging over $ \mathbb{Z} $, denoted by $ \crDescriptor{\kappa,\Psi} $, is given by the conjunction of %
	the constraints $ \crDescriptor{\kappa,\alpha,\Psi} $ for each $ \alpha \in \Psi $.	
\end{definition}

With $ \solCrDescriptor{\kappa,\Psi} $, we denote the solutions of the constraint satisfaction problem $ \crDescriptor{\kappa,\Psi} $. Each solution $ \vec{\gamma} = \tuple{\gamma_1^+,\gamma_1^-,\ldots,\gamma_n^+,\gamma_n^-} \in \solCrDescriptor{\kappa,\Psi} $ induces a unique ranking function $ \kappa_{\vec{\gamma}} $ obtained from Equation \eqref{eq:pcp-characterisation-ocf} in Theorem \ref{prop:char_c_change} by choosing $ \kappa_0 $ as the smallest integer  such that the equation yields a ranking function, i.e., there is a propositional interpretation $ \omega\in\Omega $ such that $ \kappa_{\vec{\gamma}}(\omega)=0 $ and for all $ \omega\in\Omega $ the value $ \kappa_{\vec{\gamma}}(\omega) $ is a non-negative integer.

\begin{example}[continued%
	]\label{exmpl:constraint_system_for_descriptor_revision}
Consider $ \Kpen $ from Table \ref{tbl:exmpl} and the \disjunctionFree\ descriptor $ \Psi=\{ \belief{\ramseyCond{p}{b}}, \neg\belief{\ramseyCond{f}{p}}, \neg\belief{\ramseyCond{\overline{f}}{p}} \} $. The  CSP $ \crDescriptor{\kappa,\Psi} $ is given by:
\begin{align*}
\crDescriptor{\Kpen,\belief{\ramseyCond{p}{b}},\Psi}\!:
& &
\gamma_1^- - \gamma_1^+ >  &  \min\{ \Kpen(bfp)  + \gamma_2^+ {+} \gamma_3^- ,\, \Kpen(b\overline{f}p) + \gamma_3^+ + \gamma_2^- \} \\
& &   &  - \min\{ \Kpen(bf\overline{p}), \Kpen(b\overline{f}\overline{p}))  \} \\[0.5em]
\crDescriptor{\Kpen,\neg\belief{\ramseyCond{f}{p}},\Psi}\!:
& &
\gamma_2^- - \gamma_2^+ \leq  &  \min\{ \Kpen(bfp) + \gamma_1^+ + \gamma_3^- ,\, \Kpen(\overline{b}fp) + \gamma_3^- \} \\
& &   &  - \min\{ \Kpen(b\overline{f}p) + \gamma_1^+ + \gamma_3^+, \Kpen(\overline{b}\,\overline{f}p))  + \gamma_3^+  \} \\[0.5em]
\crDescriptor{\Kpen,\neg\belief{\ramseyCond{\overline{f}}{p}},\Psi}\!:
& &
\gamma_3^- - \gamma_3^+ \leq  &  \min\{ \Kpen(b\overline{f}p) + \gamma_1^+ + \gamma_2^- ,\, \Kpen(\overline{b}\,\overline{f}p) + \gamma_2^- \} \\
& &   &  - \min\{ \Kpen(bfp) + \gamma_1^+ + \gamma_2^+, \Kpen(\overline{b}fp))  + \gamma_2^+  \} 
\end{align*}
The vector $ \vec{\gamma}=\tuple{\gamma_1^+,\gamma_1^-,\gamma_2^+,\gamma_2^-,\gamma_3^+,\gamma_3^-} $ with $ \gamma_1^+=0 $, $ \gamma_1^-=-1 $, $ \gamma_2^+=0 $, $ \gamma_2^-=2 $, $ \gamma_3^+=0 $, and $ \gamma_3^-=0 $ is a solution of $ \solCrDescriptor{\Kpen,\Psi} $, i.e. $ \vec{\gamma} \in\solCrDescriptor{\Kpen,\Psi} $.
We obtain the ranking function $ \Kpcp=\kappa_{\vec{\gamma}} $ given in Table \ref{tbl:exmpl}.
\end{example}

We examine whether our approach is sound and complete with respect to conditional descriptor revision.

\begin{theorem}[Soundness of $ \crDescriptor{\kappa,\Psi} $]\label{prop:soundness_descriptor}
Let $ \kappa $ be an ordinal conditional ranking function, $ \Psi $ be an \disjunctionFree\  belief descriptor, and let $ \vec{\gamma} \in \solCrDescriptor{\Psi} $.  Then, the change from $ \kappa $ to $ \kappa_{\vec{\gamma}} $ is a conditional descriptor revision by $ \Psi $ (over all ranking functions).
\end{theorem}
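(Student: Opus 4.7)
The plan is to verify the three things needed to conclude that the change $\kappa \mapsto \kappa_{\vec{\gamma}}$ is a conditional descriptor revision by $\Psi$: namely, (i) that $\kappa_{\vec{\gamma}}$ is a legitimate ranking function, (ii) that $\kappa_{\vec{\gamma}} \Vdash \Psi$, (iii) that the change satisfies PCP with respect to $\mathit{cond}(\Psi)$, and finally (iv) to exhibit a choice function $C$ witnessing the descriptor revision. Steps (i) and (iv) are essentially by definition: by construction of $\kappa_{\vec{\gamma}}$, the constant $\kappa_0$ is chosen so that $\kappa_{\vec{\gamma}}$ takes non-negative integer values with at least one zero, and since $\mathbb{K}_\kappa$ is the set of all ranking functions, any $C$ mapping the set $\{\kappa' \in \mathbb{K}_\kappa \mid \kappa' \Vdash \Psi\}$ to $\kappa_{\vec{\gamma}}$ suffices once (ii) is established.

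For (iii), I would invoke Proposition~\ref{prop:pcp_characterisation} directly: since $\kappa_{\vec{\gamma}}$ is defined by Equation~\eqref{eq:pcp-characterisation-ocf} with the integers $\kappa_0, \gamma_i^+, \gamma_i^-$, the ``if'' direction of that proposition yields that the change satisfies the principle of conditional preservation with respect to $\mathit{cond}(\Psi)$.

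The main work lies in (ii). Since $\Psi$ is elementary, it suffices to verify each literal descriptor $\alpha \in \Psi$ separately. Fix such a literal, say a positive one $\alpha = \belief{\ramseyCond{B_i}{A_i}}$; the negative case is symmetric. I would first expand $\kappa_{\vec{\gamma}}(A_i B_i)$ using Equation~\eqref{eq:pcp-characterisation-ocf}: since every $\omega \models A_i B_i$ verifies the $i$-th conditional, the term $\gamma_i^+$ appears in the sum for every such $\omega$ and can be pulled out of the minimum, giving
\begin{equation*}
\kappa_{\vec{\gamma}}(A_i B_i) \;=\; \kappa_0 + \gamma_i^+ + \min_{\omega \models A_i B_i}\!\Bigl(\kappa(\omega) + \sum_{\substack{j \neq i \\ \omega \models A_j B_j}} \gamma_j^+ + \sum_{\substack{j \neq i \\ \omega \models A_j \bar{B}_j}} \gamma_j^-\Bigr).
\end{equation*}
An analogous identity holds for $\kappa_{\vec{\gamma}}(A_i \bar{B}_i)$, with $\gamma_i^-$ replacing $\gamma_i^+$. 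Subtracting, the $\kappa_0$ cancels and the inequality $\kappa_{\vec{\gamma}}(A_i B_i) < \kappa_{\vec{\gamma}}(A_i \bar{B}_i)$, which is equivalent to $\kappa_{\vec{\gamma}} \models \ramseyCond{B_i}{A_i}$, reduces exactly to Equation~\eqref{eq:pos_literal_descriptor} in Definition~\ref{def:constract_literal_descriptor}. Since $\vec{\gamma} \in \solCrDescriptor{\kappa,\Psi}$ satisfies $\crDescriptor{\kappa,\alpha,\Psi}$, the literal $\alpha$ holds. The negative case uses Equation~\eqref{eq:neg_literal_descriptor} and the equivalence of $\kappa_{\vec{\gamma}} \not\models \ramseyCond{B_i}{A_i}$ with $\kappa_{\vec{\gamma}}(A_i B_i) \geq \kappa_{\vec{\gamma}}(A_i \bar{B}_i)$ in exactly the same manner.

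The main obstacle I expect is purely bookkeeping: cleanly pulling $\gamma_i^+$ (respectively $\gamma_i^-$) out of the minimum and verifying that the remaining inner expression matches the right-hand side of the constraint without index clashes. There is also a small edge case to handle, namely when $A_i$ is unsatisfiable so that one of the minima is over an empty set; in that case the conditional is trivially accepted (or the literal must be vacuously handled), and one would note that the constraints then become vacuous or can be taken to involve $\infty$, so the soundness argument remains valid.
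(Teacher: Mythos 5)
Your proof is correct and follows essentially the route the paper intends: the paper gives no explicit argument of its own but remarks that the soundness proof for the CSP characterisation of c-representations transfers, and that transferred argument is exactly your step (ii) --- factoring $\gamma_i^+$ (resp.\ $\gamma_i^-$) out of the minimum over the verifying (resp.\ falsifying) worlds so that acceptance of $(B_i|A_i)$ by $\kappa_{\vec{\gamma}}$ reduces precisely to the corresponding constraint inequality. Your additional bookkeeping (well-definedness of $\kappa_{\vec{\gamma}}$ via the choice of $\kappa_0$, the existence of a witnessing choice function over all ranking functions, and the PCP requirement via the ``if'' direction of the characterisation proposition) makes explicit what the paper leaves implicit.
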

Note that a ranking function $ \kappa^\circ $ is a c-representation \cite{Kern-Isberner00d} for a set of conditionals $ \mathcal{R} $ if and only if $ \kappa^\circ $ is the result of a conditional descriptor revision starting form a ranking function $ \kappa $ such that $  \kappa(\omega)=0$ for every $ \omega\in\Omega $ with a descriptor $ \Psi=\{ \belief{\ramseyCond{B}{A}}  \mid \ramseyCond{B}{A} \in \mathcal{R}  \} $. 
The construction of a c-representation can be characterised by a constraint-satisfaction problem similar to the one given in Definition \ref{def:constract_conditional_descriptor_revision} \cite{Kern-Isberner00d,BeierleEichhornKernIsbernerKutsch2018AMAI}. 
The soundness proof transfers to a proof of Theorem \ref{prop:soundness_descriptor}.

\begin{theorem}[Completeness of $ \crDescriptor{\kappa,\Psi} $]\label{prop:completeness_descriptor}
	Let $ \Psi $ be an \disjunctionFree\  belief descriptor and $ \kappa,\kappa^\circ $ be ordinal conditional functions.
	If the change from $ \kappa $ to $ \kappa^\circ $ is a conditional descriptor revision by $ \Psi $ (over all ranking functions), then there exists an vector $ \vec{\gamma}\in\solCrDescriptor{\kappa,\Psi} $ such that $ \kappa^\circ=\kappa_{\vec{\gamma}} $.
\end{theorem}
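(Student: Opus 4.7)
The plan is to unpack the hypothesis via Proposition \ref{prop:pcp_characterisation} and show that the witnessing shift parameters obtained from PCP simultaneously solve every inequality in $\crDescriptor{\kappa,\Psi}$. By Definition~\ref{def:conditiona_descriptor_revision}, the assumed change from $\kappa$ to $\kappa^\circ$ both satisfies the principle of conditional preservation with respect to $\mathit{cond}(\Psi)=\{\ramseyCond{B_1}{A_1},\ldots,\ramseyCond{B_n}{A_n}\}$ and fulfils $\kappa^\circ \Vdash \Psi$. From the first fact, Proposition~\ref{prop:pcp_characterisation} yields integers $\kappa_0,\gamma_1^+,\gamma_1^-,\ldots,\gamma_n^+,\gamma_n^- \in \mathbb{Z}$ such that $\kappa^\circ(\omega)$ is expressible via Equation~\eqref{eq:pcp-characterisation-ocf}. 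Forming $\vec{\gamma}=\tuple{\gamma_1^+,\gamma_1^-,\ldots,\gamma_n^+,\gamma_n^-}$, I aim to show $\vec{\gamma}\in\solCrDescriptor{\kappa,\Psi}$ and that $\kappa^\circ = \kappa_{\vec{\gamma}}$.

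For each literal descriptor $\alpha \in \Psi$, I would check the corresponding constraint. Consider a positive literal $\alpha = \belief{\ramseyCond{B_i}{A_i}}$. Then $\kappa^\circ \Vdash \Psi$ entails $\kappa^\circ \models \ramseyCond{B_i}{A_i}$, i.e.\ $\kappa^\circ(A_iB_i) < \kappa^\circ(A_i\overline{B_i})$. Substituting Equation~\eqref{eq:pcp-characterisation-ocf} into both sides, the normalisation $\kappa_0$ cancels, and for any world $\omega \models A_iB_i$ (respectively $\omega \models A_i\overline{B_i}$) the term $\gamma_i^+$ (respectively $\gamma_i^-$) is present in every summand of the minimum and can be factored out. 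What remains is exactly
\[
\gamma_i^+ + \min_{\omega \vDash A_iB_i}\Bigl[\kappa(\omega) + \sum_{\substack{j\neq i\\\omega\vDash A_jB_j}}\gamma_j^+ + \sum_{\substack{j\neq i\\\omega\vDash A_j\overline{B_j}}}\gamma_j^-\Bigr]
< \gamma_i^- + \min_{\omega \vDash A_i\overline{B_i}}\Bigl[\kappa(\omega) + \sum_{\substack{j\neq i\\\omega\vDash A_jB_j}}\gamma_j^+ + \sum_{\substack{j\neq i\\\omega\vDash A_j\overline{B_j}}}\gamma_j^-\Bigr],
\]
which rearranges into Equation~\eqref{eq:pos_literal_descriptor}. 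For a negative literal $\alpha = \neg\belief{\ramseyCond{B_i}{A_i}}$, the acceptance condition flips to $\kappa^\circ(A_iB_i) \geq \kappa^\circ(A_i\overline{B_i})$, yielding the dual inequality~\eqref{eq:neg_literal_descriptor} by exactly the same algebraic manipulation. Hence $\vec{\gamma}$ satisfies all constraints in $\crDescriptor{\kappa,\Psi}$.

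Finally, I would verify $\kappa^\circ = \kappa_{\vec{\gamma}}$. Both functions arise from the same $\vec{\gamma}$ via Equation~\eqref{eq:pcp-characterisation-ocf} and differ at most by a common additive constant corresponding to the choice of $\kappa_0$. Since $\kappa^\circ$ is by assumption an OCF, $\min_\omega \kappa^\circ(\omega)=0$; by construction, $\kappa_{\vec{\gamma}}$ also satisfies $\min_\omega \kappa_{\vec{\gamma}}(\omega)=0$, because $\kappa_0$ in its definition is chosen as the smallest integer making the result an OCF. Thus both normalisation constants must coincide and the two functions are identical.

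The main technical obstacle is the careful bookkeeping in the $\min$–sum interchange for the literal constraints: one must argue that for each fixed $i$, the contributions of $\gamma_i^+$ (resp.\ $\gamma_i^-$) are constant across the minimisation range $\{\omega \models A_iB_i\}$ (resp.\ $\{\omega \models A_i\overline{B_i}\}$) and can therefore be pulled out, while the $j\neq i$ terms genuinely depend on $\omega$ and must remain inside. A secondary subtlety is ensuring that the integrality of the $\gamma_i^\pm$ is preserved as promised in Proposition~\ref{prop:pcp_characterisation}, so that $\vec{\gamma}$ indeed lies in the integer search space of $\crDescriptor{\kappa,\Psi}$; this is immediate from the cited proposition but worth noting explicitly.
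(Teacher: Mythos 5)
Your proposal is correct and follows essentially the same route as the paper's own proof sketch: invoke Proposition~\ref{prop:pcp_characterisation} to obtain $\vec{\gamma}$ with $\kappa^\circ=\kappa_{\vec{\gamma}}$, then derive each constraint of $\crDescriptor{\kappa,\Psi}$ from the success condition $\kappa^\circ\Vdash\alpha$ by substituting Equation~\eqref{eq:pcp-characterisation-ocf} and pulling $\gamma_i^+$ (resp.\ $\gamma_i^-$) out of the minima over the verifying (resp.\ falsifying) worlds. You merely make explicit two steps the paper compresses, namely the algebraic min--sum manipulation it delegates to a citation and the normalisation argument showing that the constant $\kappa_0$ of $\kappa^\circ$ coincides with the one chosen in the definition of $\kappa_{\vec{\gamma}}$.
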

\begin{proof}[sketch] 
	Because of Proposition \ref{prop:pcp_characterisation}, there exists $ \kappa_0 $ and $ \vec{\gamma}=\tuple{\gamma_1^+,\gamma_1^-,\ldots} $ such that the ranking function $ \kappa^\circ $  is representable as stated in Equation \eqref{eq:pcp-characterisation-ocf}. 
	Therefore, we have $ \kappa^\circ=\kappa_{\vec{\gamma}} $. 
	It remains to show that $ \vec{\gamma}\in\solCrDescriptor{\kappa,\Psi} $. 
	Note that by our assumptions
	$ \kappa^\circ\Vdash\alpha $ holds for each $ \alpha\in\Psi $.
	Suppose that $ \alpha $ is a positive literal descriptor, i.e. $ \alpha=\belief{\ramseyCond{B}{A}} $, and thus, $ \kappa^\circ(AB) < \kappa^\circ(A\ol{B}) $. 
	By employing Equation \eqref{eq:pcp-characterisation-ocf}, we obtain Equation \eqref{eq:pos_literal_descriptor} from $ \kappa^\circ(AB) < \kappa^\circ(A\ol{B}) $  by algebraic transformations \cite{Kern-Isberner00d}.
	In an analogue way, one can obtain Equation \eqref{eq:neg_literal_descriptor} from a negative literal descriptor. 
	Note that these are exactly the inequalities in $ \crDescriptor{\kappa,\Psi} $. 
	Therefore, the vector $ \vec{\gamma} $ is a solution for $ \solCrDescriptor{\kappa,\Psi} $.
\end{proof}

\section{Implementation by ChangeOCF}
\label{sec:implementation}

\newcommand{\crDescriptorMaxImpactLimit}[1]{\ensuremath{\mathit{CR}^{\vec{u}}_{\!\mathit{D}}(#1)}}

We implemented descriptor revision for conditionals and elementary descriptors under the principle of conditional preservation.
Given a ranking function $\kappa$ and an elementary descriptor $\Psi$, our system, called ChangeOCF, calculates a list of possible outcomes of a revision of $\kappa$ with $\Psi$.
To calculate the possible outcomes of the revision, ChangeOCF uses a constraint system based on $\crDescriptor{\kappa,\Psi}$ introduced in Section~\ref{sec:constraint_system}.
Following the Propositions~\ref{prop:soundness_descriptor} and \ref{prop:completeness_descriptor}, the solutions of this constraint system correspond to the outcomes of a conditional descriptor revision.
A straightforward approach would be to solve $\crDescriptor{\kappa,\Psi}$ for the given $\kappa$ and $\Psi$.
Then, for each $\vec{\gamma} \in \solCrDescriptor{\Psi}$ the corresponding ranking function $\kappa_{\vec{\gamma}}$ is calculated.

In general, $\solCrDescriptor{\Psi}$ may contain infinite elements, but there is only a finite number of equivalence classes with respect to the acceptance of conditionals.
Therefore, it is possible to restrict the set of solutions to finitely many without losing interesting results.
To do this, we used an approach inspired by \emph{maximal impacts} for c-representations \cite{BeierleEichhornKernIsbernerKutsch2018AMAI} that addresses a similar problem for the enumeration of c-representations.
The idea of maximal impacts is to add explicit bounds for the value of each $\gamma_i^+, \gamma_i^-$.
This reduces the set of possible solutions to a finite set, without losing equivalent solutions when choosing the bounds appropriately.
ChangeOCF limits the value of $\gamma_1^+, \gamma_1^-, \dots, \gamma_n^+, \gamma_n^-$ to an individual finite domain by extending
the constraint system $\crDescriptor{\kappa,\Psi}$ with constraints $ u_i^{\min-} \leq \gamma_i^- \leq u_i^{\max-} $ and $ u_i^{\min+} \leq \gamma_i^+ \leq u_i^{\max+} $ for $ 1\leq  i \leq n $.
We denote this extended constraint system by $\crDescriptorMaxImpactLimit{\kappa,\Psi}$ with $\vec{u} = \langle u_1^{\min-},\allowbreak u_1^{\max-},\allowbreak u_1^{\min+},\allowbreak u_1^{\max+},\allowbreak \dots,  u_n^{\max+} \rangle$.
Like for c-representations \cite{KomoBeierle2020ISAIM}, it is an open problem 
	which values for $ \vec{u} $ guarantee that a representative for each equivalence class of solutions with respect to the acceptance of conditionals is found for a given $\kappa$ and $\Psi$.

The implementation of
ChangeOCF is 
build upon
by InfOCF-Lib\cite{Kutsch2019DKBKIK}, a Java library for reasoning with conditionals and ranking functions. 
InfOCF-Lib calculates the c-representations of a conditional knowledge base by solving a constraint system similar to $\crDescriptorMaxImpactLimit{\kappa,\Psi}$.
The interface of ChangeOCF  is implemented in Java.
To solve $\crDescriptorMaxImpactLimit{\kappa,\Psi}$, we use SICStus Prolog and its constraint logic programming library for finite domains \cite{CarlssonOttossonCarlson97sicstusCLPfd}.
The Prolog implementation is an adaption of the implementation of InfOCF \cite{BeierleEichhornKutsch2017KIzeitschrift} to the more general case of belief change. 
\begin{example}[continued%
	]\label{exmpl:drc_implementation}
	Consider again the descriptor revision of $ \Kpen $ from Table \ref{tbl:exmpl} with the \disjunctionFree\ descriptor $ \Psi=\{ \belief{\ramseyCond{p}{b}}, \neg\belief{\ramseyCond{f}{p}}, \neg\belief{\ramseyCond{\overline{f}}{p}} \} $.
	The corresponding constraint satisfaction problem $ \crDescriptorMaxImpactLimit{\kappa,\Psi} $ is given by the conjunction of \crDescriptor{\kappa,\Psi} from Example \ref{exmpl:constraint_system_for_descriptor_revision} with the following constraints:
\begin{align*}
	   u_1^{\min-} &\leq \gamma_1^-  \leq u_1^{\max-}		& u_2^{\min-} &\leq \gamma_2^-  \leq u_1^{\max-}	& u_3^{\min-} &\leq \gamma_3^-  \leq u_3^{\max-}\\
	   u_1^{\min+} &\leq \gamma_1^+  \leq u_1^{\max+} 	 & u_2^{\min+} &\leq \gamma_2^+  \leq u_1^{\max+}	& u_3^{\min+} &\leq \gamma_3^+  \leq u_3^{\max+}
	\end{align*}
	If we choose for example $\vec{u} = \langle -2, 0, 0, 2, -1, 1, -1, 1, 0, 0, 0, 0\rangle$, 
	there are nine solutions to $ \crDescriptorMaxImpactLimit{\kappa,\Psi} $.
 	One of the solutions is $\vec{\gamma} = \langle 0, 2, -1, 0, 0, 0 \rangle$, which corresponds to $\induzierteOCF{\vec{\gamma}} = \Kpcp$ from Table \ref{tbl:exmpl}.

\end{example}

\section{Summary and Future Work}
\label{sec:conclusion}
In this article, we investigated descriptor revision for a conditional logic and its realisation.
We defined elementary descriptors, a large fragment of the full descriptor language, allowing to express a multitude of different kind of changes processes. 
In particular, elementary descriptors cover the success conditions of AGM revision and AGM contraction.
We introduced conditional descriptor revision, which is an extension of descriptor revision for conditionals obeying the principle of conditional preservation by Kern-Isberner.
We gave a characterisation by a constraint satisfaction problem and an implementation of conditional descriptor revision with elementary descriptors was presented.

For future work, we plan to give a characterisation of conditional descriptor revision with descriptors with disjunction. 
This requires a more fine-grained handling of the interaction of the constraints, and might require transformations of a descriptor into a normal form. 
Another open problem is the determination of maximal impacts for the constraint problem such that all solutions up to equivalence with respect to acceptance of conditionals are captured. 

\clearpage
\bibliographystyle{splncs04}

\newcommand{\verzeichnisBibtex}{\string~/BibTeXReferencesSVNlink}
\bibliography{bibexport}%

\end{document}